\def\BibTeX{{\rm B\kern-.05em{\sc i\kern-.025em b}\kern-.08em
    T\kern-.1667em\lower.7ex\hbox{E}\kern-.125emX}}
\newtheorem{prop}{Proposition}
\begin{document}

\title{ZK-GanDef: A GAN based Zero Knowledge Adversarial Training Defense for Neural Networks\\
\thanks{This work is submitted to DSN 2019 as a regular paper. \textcolor{black}{The official implementation is open-source on Github. \url{https://github.com/GuanxiongLiu/DSN-ZK-GanDef.git}}}
}

\author{\IEEEauthorblockN{Guanxiong Liu}
\IEEEauthorblockA{\textit{ECE Department} \\
\textit{New Jersey Institute of Technology}\\
Newark, USA \\
gl236@njit.edu}
\and
\IEEEauthorblockN{Issa Khalil}
\IEEEauthorblockA{\textit{QCRI} \\
\textit{Hamad bin Khalifa University}\\
Doha, Qatar \\
ikhalil@hbku.edu.qa}
\and
\IEEEauthorblockN{Abdallah Khreishah}
\IEEEauthorblockA{\textit{ECE Department} \\
\textit{New Jersey Institute of Technology}\\
Newark, USA \\
abdallah@njit.edu}
}

\newcommand{\issa}[1]{\todo[inline,color=brown!40]{Issa: #1}}
\newcommand{\guan}[1]{\todo[inline,color=green!40]{Guanxiong: #1}}

\maketitle

\begin{abstract}
Neural Network classifiers have been used successfully in a wide range of applications. However, their underlying assumption of attack free environment has been defied by adversarial examples. Researchers tried to develop defenses; however, existing approaches are still far from providing effective solutions to this evolving problem. In this paper, we design a generative adversarial net (GAN) based zero knowledge adversarial training defense, dubbed \textbf{ZK-GanDef}, which does not consume adversarial examples during training. Therefore, ZK-GanDef is not only efficient in training but also adaptive to new adversarial examples. This advantage comes at the cost of small degradation in test accuracy compared to full knowledge approaches. Our experiments show that ZK-GanDef enhances test accuracy on adversarial examples by up-to 49.17\% compared to zero knowledge approaches. More importantly, its test accuracy is close to that of the state-of-the-art full knowledge approaches (maximum degradation of 8.46\%), while taking much less training time.
\end{abstract}

\begin{IEEEkeywords}
Adversarial Training Defense, Generative Adversarial Nets, full knowledge training, zero knowledge training
\end{IEEEkeywords}

\section{Introduction}\label{sec:introduction}

Due to the surprisingly good representation power of complex distributions, \textcolor{black}{neural network (NN)} classifiers are widely used in many tasks which include natural language processing, computer vision and cyber security. For example, in cyber security, NN classifiers are used for spam filtering, phishing detection as well as face recognition \cite{rowley1998neural} \cite{abu2007comparison}. However, the training and usage of NN classifiers are based on an underlying assumption that the environment is attack free. Therefore, such classifiers fail when adversarial examples are presented to them. 

Adversarial examples were first introduced in 2013 by Szegedy et. al \cite{szegedy2013intriguing} in the context of image classification. They show that adding specially designed perturbations to original images could effectively mislead  fully trained NN classifier. For example, in Figure~\ref{fig:fgm-example}, the adversarial perturbations added to the image of panda are visually insignificant to human eyes, but are strong enough to mislead the classifier to classify it as gibbon. Yet, more scary, the research shows that adversary could arbitrarily control the output class through carefully designed perturbations and can achieve high success rate against Vanilla classifiers, i.e., classifiers without defenses \cite{carlini2016towards}\cite{kurakin2016adversarial}\cite{madry2017towards}.

\begin{figure}[tb]
\centering
\begin{minipage}[c]{.4\textwidth}
\centering
    \includegraphics[width=\linewidth]{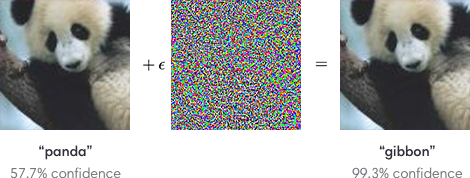}
    \caption{Fast Gradient Sign Example \cite{goodfellow2014explaining}}
    \label{fig:fgm-example}
\end{minipage}
\vspace{-5mm}
\end{figure}

Current defensive mechanisms against adversarial examples can be categorized into three different groups \cite{meng2017magnet}\cite{papernot2016distillation}. The approaches of the first group utilize augmentation and regularization to enhance test accuracy on adversarial examples. The idea here is to improve the generalization of the model as a defense against adversarial examples \cite{papernot2016distillation}. Approaches in the second group try to build protective shells around the classifier to either identify adversarial examples and filter them out, or reform perturbations and rollback to original images \cite{meng2017magnet}\cite{samangouei2018defense}. The approaches in the last group retrain NN classifiers with adversarial examples to recognize and correctly classify perturbed inputs \cite{kurakin2016adversarial}. The intuition here is that by observing some adversarial examples with their ground truth, the NN classifiers learn the patterns of adversarial perturbations and adapt to recognize similar ones.

Current defenses enhance test accuracy of existing NN models on adversarial examples and help us better understand the problem. However, such defenses are still far from wining the battle against this continuously evolving problem. Among different defenses, adversarial training with iterative adversarial examples is shown to be the state-of-the-art choice \cite{athalye2018obfuscated}. However, such defense requires too much computation to generate iterative adversarial examples during training. Based on \cite{kannan2018adversarial}, the adversarial training with iterative adversarial examples requires a cluster of GPU servers on Imagenet dataset. \textcolor{black}{Although there are defense methods that do not rely on adversarial training \cite{rajabi2018towards}\cite{xu2017feature}, these methods only address weaker attack scenarios (black-box and gray-box).}

\textcolor{black}{With these limitations on existing defenses, researchers start a new line of research on \textbf{zero knowledge} adversarial training which is independent of adversarial examples.} The idea here is to replace adversarial examples with random noise perturbations while retraining of NN classifiers \cite{kannan2018adversarial}. The intuition is to trade small decrease in accuracy for better scalability, efficiency and quick adaptability. In this work, the adversarial training approaches which utilize adversarial examples are denoted as \textbf{full knowledge} adversarial training, in contrast with zero knowledge ones.

As we show in our evaluation, existing zero knowledge adversarial training approaches, clean logit pairing (CLP) and clean logit squeezing (CLS) \cite{kannan2018adversarial}, suffer from poor prediction accuracy. In this work, we propose a GAN based zero knowledge adversarial training defense, dubbed \textbf{ZK-GanDef}. ZK-GanDef is designed based on adversarial training approach combined with feature learning \cite{louppe2017learning}\cite{xie2017controllable}\cite{lample2017fader}. It forms a competition game of two NNs: a classifier and a discriminator. We show analytically that the solution of this competition game generates a classifier which usually makes right predictions and only relies on perturbation invariant features. We conduct extensive set of experiments to evaluate the performance and the prediction accuracy of ZK-GanDef on MNIST, Fashion-MNIST and CIFAR10 datasets. Compared to CLP and CLS, the results show that ZK-GanDef has the highest test accuracy in classifying different \textbf{white-box adversarial examples} with significant superiority.

Our contributions can be summarized as follows: 

\begin{itemize}
    \item We design a GAN based zero knowledge adversarial training defense, ZK-GanDef, which utilizes feature selection to enhance test accuracy on adversarial examples.
    
    \item \textcolor{black}{We provide a mathematical intuition for the competition game used in ZK-GanDef that its trained classifier usually makes right predictions based on perturbation invariant features.}
    
    \item We empirically show that ZK-GanDef significantly enhances the test accuracy on adversarial examples over state-of-the-art zero knowledge adversarial training defenses.
    
    \item Existing work only tests CLP and CLS on small datasets like MNIST. In this work, we empirically show that CLP and CLS do not scale well to complex datasets such as CIFAR10. In contrast, we show that ZK-GanDef can defend adversarial examples in such complex datasets.
    
    \item We empirically show that ZK-GanDef can achieve comparable test accuracy to the state-of-the-art full knowledge defenses. At the same time, it significantly reduces the training time compared to full knowledge defenses. For example, its training time is 92.11\% less than that of PGD-Adv on MNIST dataset.
\end{itemize}

The rest of the work is organized as follows. Section \ref{sec:background} presents background material. The design and mathematical proof of ZK-GanDef are given in Section \ref{sec:defense}. Section \ref{sec:experiment} presents the test-bed design and the experimental settings. The evaluation results are shown in Section \ref{sec:results}. Section \ref{sec:conclusion} concludes the paper and Section \ref{sec:future} presents our future direction. 

\section{Background and Related Work} \label{sec:background}

In this section, we introduce background material about adversarial example generators and defensive mechanisms for better understanding of the concepts presented in this work. We also provide relevant references for further information about each topic.

\subsection{Generating Adversarial Examples}

The methods for generating adversarial examples against NN classifiers can be categorized according to several different aspects. In one aspect, these methods could be distinguished by the adversary's knowledge of the target NN classifier. In White-box methods, the adversary is assumed to have full knowledge about the target NN classifier (structure, parameters and inner status), and hence the generated examples are called white-box adversarial examples. On the other hand,  black-box methods assume that the adversary has no access to the inner information of the target NN classifier, and hence, the generated examples are called black-box adversarial examples. On another aspect, adversarial example generation methods could be categorized into single-step or iterative methods according to the process of generating the examples. Single-step methods only run gradient descent (ascent) algorithm once when solving the proposed optimization problem, while iterative methods repeat the computation several times until hitting predefined convergence thresholds.

Based on previous works, an adversarial example generator is generally formulated as an optimization problem which searches a small neighboring area of the original image (usually defined by $l_{1}$, $l_{2}$ or $l_{\infty}$ norm) for the existence of adversarial examples. If we denote an original image by $\bar{x}$ and the example with perturbation $\delta$ by $\hat{x} = \mathcal{F} (\bar{x} + \delta)$, then the process of searching adversarial examples can be formulated as follows \cite{goodfellow2014explaining}:
\begin{equation*}
\begin{aligned}
    & \underset{\delta}{\text{minimize}}
    & & || \hat{x} - \bar{x} || \\
    & \text{subject to}
    & & \mathcal{C} (\hat{x}) = z^{o} \\
    &
    & & \mathcal{F} (\bar{x} + \delta) \in \mathbb{R}_{[-1,1]}^{m}
\end{aligned}
\end{equation*}
The function $\mathcal{C}$ represents the classifier and outputs the pre-softmax logits based on input image. The function $\mathcal{F}$ projects the pixel value of any input image back to $\mathbb{R}_{[-1,1]}$ and ensures that the generated adversarial example is still a valid image. A perturbation is considered strong enough to fool the classifier if and only if $\mathcal{C} (\hat{x}) = z^{o}$, where $z^{o}$ is the objective pre-softmax logits designed by adversary. The global optimum of this problem corresponds to the strongest adversarial example for a given image. However, modern classifiers are highly non-linear, which makes it hard to solve the optimization problem in its original form, and hence each generator has its own approximation to make the optimization problem solvable. 

Table \ref{table:summary-notation} summarizes all the notations that we use throughout this paper. In the following, we describe the design approaches of several popular adversarial example generators that we consider in this work.

\begin{table}[tb]
    \begin{center}
    \begin{tabular}{ p{.25\linewidth} | p{.65\linewidth} }
    \hline \hline
    $\mathcal{L}, ~ \mathcal{L}_{\text{CLP}}, ~ \mathcal{L}_{\text{CLS}}$
    & loss function of NN classifier \\
    $\mathcal{F}$
    & regulation function for pixel value of generated example \\
    $z^{o}$
    & objective pre-softmax logits designed by adversary \\
    $l_{1}, ~ l_{2}, ~ l_{\infty}$
    & the 1st order, 2nd order and infinity order norm \\
    $\bar{x}, ~ \bar{X}; ~ \hat{x}, ~ \hat{X}; ~ x, ~ X$ 
    & original example; example with perturbation; their union \\
    $\bar{t}, ~ \bar{T}; ~ \hat{t}, ~ \hat{T}; ~ t, ~ T$ 
    & ground truth of $\bar{x}, ~ \bar{X}; ~ \hat{x}, ~ \hat{X}; ~ x, ~ X$ \\
    $\bar{z}, ~ \bar{Z}; ~ \hat{z}, ~ \hat{Z}; ~ z, ~ Z$ 
    & pre-softmax logits of $\bar{x}, ~ \bar{X}; ~ \hat{x}, ~ \hat{X}; ~ x, ~ X$ \\
    $\bar{s}, ~ \bar{S}; ~ \hat{s}, ~ \hat{S}; ~ s, ~ S$ 
    & source indicator of $\bar{x}, ~ \bar{X}; ~ \hat{x}, ~ \hat{X}; ~ x, ~ X$ \\
    $\delta$
    & perturbation \\
    $\mathcal{C}, ~ \mathcal{C}^{*}$
    & NN based classifier \\
    $\mathcal{D}, ~ \mathcal{D}^{*}$
    & NN based discriminator \\
    $J, ~ J'$
    & reward function of the minimax game \\
    $\Omega, ~ \Omega_{\mathcal{C}}, ~ \Omega_{\mathcal{D}}$
    & weight parameter in the NN model \\
    $\lambda, ~ \gamma$
    & trade-off hyper-parameters in CLP, CLS and GanDef \\
    \hline \hline
    \end{tabular}
    \end{center}
    \caption{Summary of Notations}
    \label{table:summary-notation}
\end{table}

\textbf{Fast Gradient Sign Method (FGSM)} is introduced by Goodfellow et. al in \cite{goodfellow2014explaining} as a single-step white-box adversarial example generator against NN image classifiers. This method tries to maximize the loss function value, $\mathcal{L}$, of NN classifier, $\mathcal{C}$, to find adversarial examples. The calculation of loss is usually defined as the difference between ground truth, $t$, and the softmax transformation, $f(z_{i}) = \frac{e^{z_{i}}}{\sum_{z_{j}} e^{z_{j}}}$, of pre-softmax logits.
%
\begin{equation*}
\begin{aligned}
    & \underset{\delta}{\text{maximize}}
    & & \mathcal{L} (\hat{z}=\mathcal{C}(\hat{x}), t) \\
    & \text{subject to}
    & & \mathcal{F} (\bar{x} + \delta) \in \mathbb{R}_{[-1,1]}^{m}
\end{aligned}
\end{equation*}
As a single-step generator, only one iteration of gradient ascent is executed to find adversarial examples. It simply generates examples with perturbation, $\hat{x}$, from original images, $\bar{x}$, by adding small perturbation, $\delta$, which changes each pixel value along the gradient direction of the loss function. Although running one iteration of gradient ascent algorithm can not guarantee finding a solution which is close enough to optimal one, empirical results show that adversarial examples from this generator can mislead Vanilla NN classifiers. Intuitively, FGSM runs faster than iterative generators at the cost of weaker adversarial examples. That is, the success rate of attack using the generated examples is relatively low due to the linear approximation of the loss function landscape. 

\textbf{Basic Iterative Method (BIM)} is introduced by Kurakin et. al in \cite{kurakin2016adversarial} as an iterative white-box adversarial example generator against NN image classifiers. BIM utilizes the same mathematical model as FGSM but runs the gradient ascent algorithm iteratively. In each iteration, BIM applies small perturbation and maps the perturbed image through the function $\mathcal{F}$. As a result, BIM approximates the loss function landscape by linear spline interpolation. Therefore, it generates stronger examples and achieves higher attack success rate than FGSM within the same neighboring area.

\textbf{Projected Gradient Descent (PGD)} is another iterative white-box adversarial example generator recently introduced by Madry et. al in \cite{madry2017towards}. Similar to BIM, PGD solves the same optimization problem iteratively with projected gradient descent algorithm. However, PGD randomly selects initial point within a limited area of the original image and repeats this several times to search adversarial example. Since the loss landscape has a surprisingly tractable structure \cite{madry2017towards}, PGD is shown experimentally to solve the optimization problem efficiently and the generated examples are stronger than those of BIM.

\subsection{Adversarial Example Defensive Methods}

Here, we categorize the design of defensive methods against adversarial examples into three major classes. The first is based on applying data augmentation and regularization during the training. The second class aims at adding protective shell on the target classifier, while the last class focuses on utilizing some adversarial examples to retrain the target classifier. In the following, we introduce representative examples from each of the above three approaches:

\textbf{Augmentation and Regularization} usually utilize synthetic data or regulate hidden states during training to enhance the test accuracy on adversarial examples. One of the early ideas in this direction is the defensive distillation. In the context of adversarial example defense, distillation is done by using the prediction score from original NN, which is usually called the teacher, as ground truth to train a smaller NN with different structure, usually called the student \cite{papernot2016distillation}\cite{papernot2017extending}. It has been shown that the calculated gradients from student model become very small or even reach zero and hence, can not be utilized by adversarial example generators \cite{papernot2017extending}. Examples of recent approaches under this category of defenses include Fortified Network \cite{lamb2018fortified} and Manifold Mixup \cite{verma2018manifold}. Fortified Network utilizes denoising autoencoder to regularize the hidden states. With this regularization, trained NN classifiers learn to mitigate the difference in hidden states between original and adversarial examples. Manifold Mixup also focus on hidden states but follows a different way. During training, Manifold Mixup uses interpolations of hidden states and logits instead of original training data to achieve regularization. However, this set of defenses is shown to be not very reliable as they are vulnerable to certain adversarial examples. For example, defensive distillation is vulnerable to Carlini attack \cite{carlini2016towards} and Manifold Mixup can only defend against single step attacks.

\textbf{Protective Shell} is a set of defensive methods designed to reject or reform adversarial examples. Meng et. al introduced an approach called MagNet \cite{meng2017magnet} which falls under this category. MagNet has two types of functional components; the detector and the reformer. Adversarial examples are either rejected by the detector or reformed by the reformer to clean up adversarial perturbations. Other recent approaches like \cite{liang2017detecting}, \cite{zhao2018detecting} and \cite{samangouei2018defense} also fall under this category and they are differentiated by the way they implement the protective shell. In \cite{liang2017detecting}, authors carefully inject adaptive noise to input images to break adversarial perturbations without significantly degrading classification accuracy. In \cite{zhao2018detecting}, a key based cryptography method is utilized to differentiate adversarial examples from original ones. In \cite{samangouei2018defense}, a generator is utilized to generate images that are similar to the inputs. By replacing the inputs with generated images, the approach shows good resistance to adversarial examples. The main limitation of the approaches under this category is the assumption that the shell is black-box to adversary, which turns to be inaccurate. For example, \cite{athalye2018obfuscated} presented  different ways to break this assumption.

\textbf{Adversarial Training} is based on the idea that adversarial examples can be considered as blind spots of the original training data \cite{xu2016automatically}. By retraining with samples of adversarial examples, the classifier learns perturbation patterns from adversarial examples and generalizes its prediction to account for such perturbations. In \cite{goodfellow2014explaining}, adversarial examples generated by FGSM are used for adversarial training of a NN classifier. The results show that the retrained classifier can correctly classify adversarial examples of this single step attack (FGSM). Later works in \cite{madry2017towards} and \cite{tramer2017ensemble} enhance the adversarial training process so that the trained models can defend not only single step attacks but also iterative attacks like BIM and PGD. A more recent work under this category \cite{kannan2018adversarial} introduces two zero knowledge adversarial training defenses. The defenses use Gaussian random noise for perturbations and include a penalty term based on pre-softmax logits, $z$. However, the design of penalty term is simple and not flexible enough to handle complex patterns in $z$. Our goal in this work is to design a flexible zero knowledge defense that handles $z$ in a more sophisticated way to achieve higher test accuracy on adversarial examples.

\section{ZK-GanDef: GAN based Zero Knowledge Adversarial Training Defense} \label{sec:defense}

In this section, we first introduce existing zero knowledge adversarial training defenses, then, we present the design and the algorithmic details of ZK-GanDef.
\begin{figure*}[tb]
\centering
\begin{minipage}[c]{.3\textwidth}
\centering
    \includegraphics[width=\linewidth]{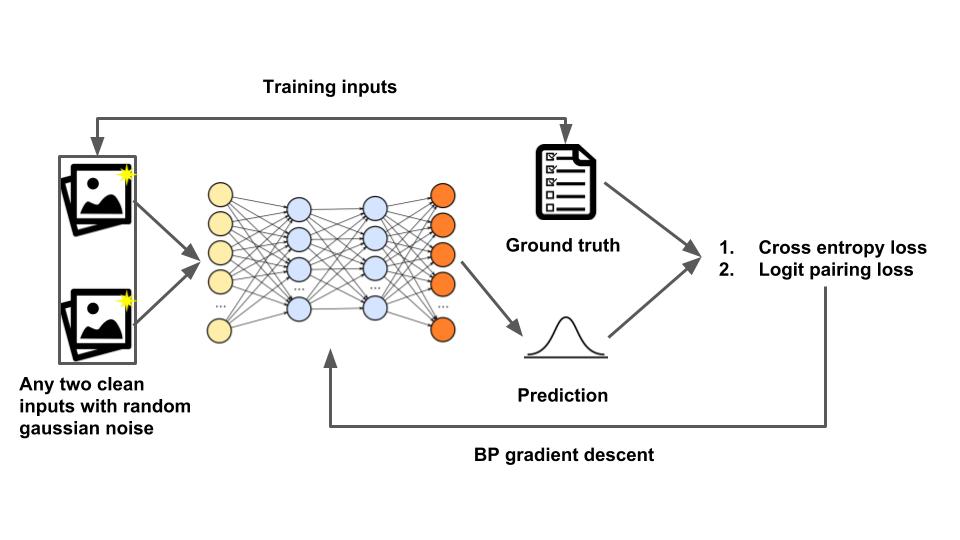}
    \subcaption{Clean Logit Pairing}
    \label{fig:clp-train}
\end{minipage}
\begin{minipage}[c]{.3\textwidth}
\centering
    \includegraphics[width=\linewidth]{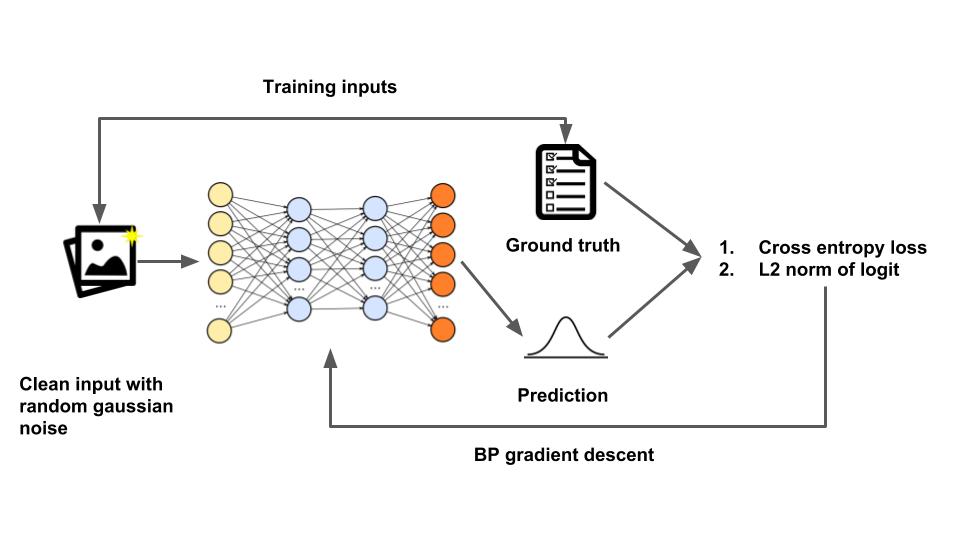}
    \subcaption{Clean Logit Squeezing}
    \label{fig:cls-train}
\end{minipage}
\begin{minipage}[c]{.3\textwidth}
\centering
    \includegraphics[width=\linewidth]{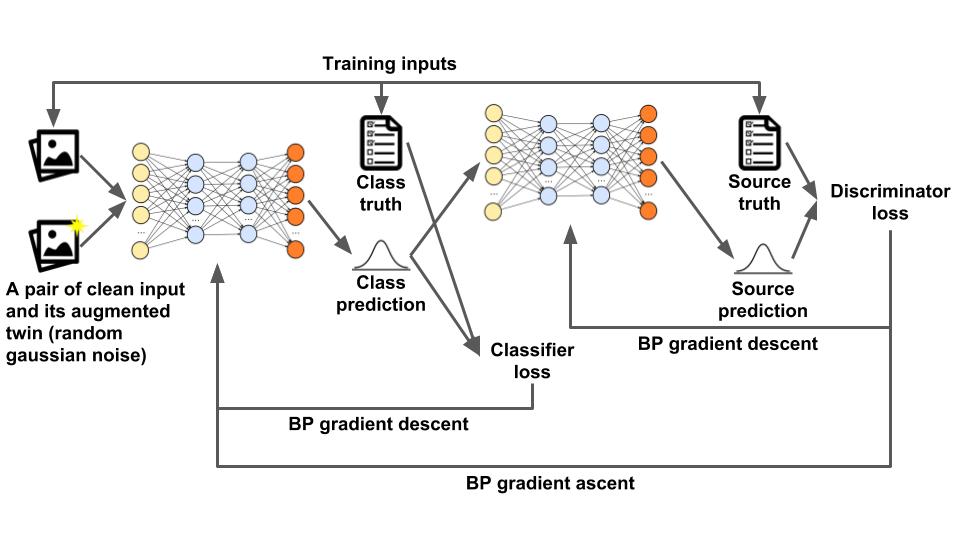}
    \subcaption{ZK-GanDef}
    \label{fig:zkg-train}
\end{minipage}
\caption{Training Procedure of Different Zero Knowledge Defenses}
\label{fig:zero-knowledge-defenses}
\end{figure*}

\subsection{Zero Knowledge Adversarial Training}

Recall that full knowledge adversarial training defenses retrain NN classifier with adversarial examples. Since adversarial examples are created by solving an optimization problem, its preparation consumes significant amount of computation, especially when iterative adversarial examples are utilized. Based on experiments in \cite{kannan2018adversarial}, generating adversarial examples on Imagenet dataset requires a cluster of GPU servers. To overcome this limitation, authors in \cite{kannan2018adversarial} also introduce two zero knowledge adversarial training defenses dubbed CLP and CLS. Instead of retraining with adversarial examples, these approaches retrain with examples perturbed with random Gaussian noise. The idea here is to speedup the training process by eliminating the computationally expensive step of adversarial examples generation. The caveat, however, is that since the retraining is performed with "fake" adversarial examples, the test accuracy against "true" adversarial examples degrades.

The training process of CLP is visualized in Figure \ref{fig:clp-train}. The retraining dataset consists of several pairs of randomly sampled original examples perturbed with random Gaussian noise. After the feed forward pass through the NN classifier, two different pre-softmax logits are generated. The differences between these pre-softmax logits and their corresponding ground truths are calculated as the first part of the total loss. The $l_{2}$ norm of the difference between these two pre-softmax logits is also calculated and used as the second part in the total loss. Based on the total loss, the weights, $\Omega$, of the NN classifier are updated by gradient descent algorithm and back propagation. The training loss of CLP can be summarized as follows:
\begin{equation*}
\begin{aligned}
    & \mathcal{L}_{\text{CLP}}(\mathcal{C}) = 
    & \mathcal{L}(\hat{z}_{1} = \mathcal{C}(\hat{x}_{1}), \hat{t}_{1})
    & + \mathcal{L}(\hat{z}_{2} = \mathcal{C}(\hat{x}_{2}), \hat{t}_{2}) \\
    & & & + \lambda l_{2}(\mathcal{C}(\hat{x}_{1}) - \mathcal{C}(\hat{x}_{2}))
\end{aligned}
\end{equation*}

The training process of the other zero knowledge approach, CLS, is shown in Figure \ref{fig:cls-train}. Similar to CLP, CLS retrains with examples perturbed with random Gaussian noise. However, instead of using pairs of inputs, CLS uses individual inputs to the NN classifier in the forward pass. The first term of the total loss in CLS is still calculated by a predefined loss function of pre-softmax logits and the corresponding ground truths. Different from the CLP, CLS directly calculates the $l_{2}$ norm of pre-softmax logits as the second term in its total loss. Thereafter, it follows the same training process with gradient descent algorithm and back propagation to update the weights, $\Omega$, in the NN classifier. The loss function of CLS is as follows:

\begin{equation*}
\begin{aligned}
    & \mathcal{L}_{\text{CLS}}(\mathcal{C}) = 
    & \mathcal{L}(\hat{z} = \mathcal{C}(\hat{x}), \hat{t})
    & + \lambda l_{2}(\mathcal{C}(\hat{x}))
\end{aligned}
\end{equation*}

The hypothesis behind CLP and CLS is that abnormal large values in pre-softmax logits are signals of adversarial examples. Therefore, they both add penalty term to the loss function during the training in order to prevent such over confident predictions. Although the penalty terms are different, both defenses encourage the NN classifier to output small and smooth pre-softmax logits. 

\subsection{Design of ZK-GanDef}

As mentioned in the previous subsection, CLP and CLS try to prevent overconfident predictions by penalizing high pre-softmax logits. However, the penalty terms used are oversimplified and do not utilize other valuable information contained in the pre-softmax logits. This results, as we see in the evaluation section, in poor accuracy on complex datasets. On the other hand, our ZK-GanDef is designed to better utilize the rich information available in the pre-softmax logits. As Figure \ref{fig:zkg-train} shows, Zk-Gandef comprises a classifier and a discriminator. The input to the classifier includes both original images and randomly perturbed examples. It has been shown in transfer learning, \cite{goodfellow2016deep}, that the pre-softmax logits output of the classifier relates to the extracted features from its inputs. Therefore, we use a discriminator to identify whether the logit output of the classifier belongs to an original image or a perturbed example. The intuition here is that the features extracted by a Vanilla NN classifier from perturbed examples will contain some kind of perturbations, and hence can be recognized by a trained discriminator.

In this work, we envision that the classifier could be seen as a generator that generates pre-softmax logits based on selected features from inputs. Then, the classifier and the discriminator engage in a minmax game, which is also known as \textit{Generative Adversarial Net} (GAN) \cite{goodfellow2016deep}. In this minimax game, the discriminator tries to make perfect prediction about the source of inputs (original or perturbed). At the same time, the classifier tries to correctly classify inputs as well as hide the source information from the discriminator. This process trains a classifier which makes prediction based on perturbation invariant features from inputs, as well as a discriminator which can identify whether the features used by the fellow classifier contain any perturbations. \textcolor{black}{Through training in this competition game, the feature learning in the classifier is regulated by the discriminator and it finally leads to defense against adversarial examples.}

Compared with the CLP and CLS, ZK-GanDef has a more sophisticated way of utilizing pre-softmax logits. Instead of encouraging the NN classifier to make small and smooth logits, ZK-GanDef aims at differentiating the latent pattern of logits between original images and examples with perturbations. Therefore, the NN classifier in ZK-GanDef is encouraged to select perturbation invariant features, which enhance its test accuracy of adversarial examples on complex datasets. \textcolor{black}{It is worth to mention that an example with Gaussian perturbation is not necessary to be an adversarial example. However, results in \cite{kannan2018adversarial} show that defenses against adversarial examples can be built by training against examples with Gaussian perturbation. Our method is built upon this empirical conclusion.}

\subsection{ZK-GanDef Training Algorithm}

Given the training data pair $\langle x, t \rangle$, where $x \in \cup (\bar{X}, \hat{X})$, we try to find a classification function $\mathcal{C}$, which uses $x$ to give a proper pre-softmax logits $z$ corresponding to $t$. The goal is to train the classifier in ZK-GanDef to model the conditional probability $q_{C}(z|x)$ with only perturbation invariant features. To achieve this, we employ another NN and call it a discriminator $\mathcal{D}$. $\mathcal{D}$ uses the pre-softmax logits $z$ from $\mathcal{C}$ as inputs and predicts whether the input image to $\mathcal{C}$ was $\bar{x}$ or $\hat{x}$. This process can be performed by maximizing the conditional probability $q_{D}(s|z)$, where $s$ is a Boolean variable indicating whether the input to $\mathcal{C}$ was original or randomly perturbed image. The combined minmax problem of the classifier and the discriminator is formulated as:
\begin{equation*}
\begin{aligned}
    & \underset{\mathcal{C}}{\text{min}} ~ \underset{\mathcal{D}}{\text{max}} ~ J(\mathcal{C}, \mathcal{D})
\end{aligned}
\end{equation*}
\begin{equation*}
\begin{aligned}
    & \text{where} ~~ J(\mathcal{C}, \mathcal{D}) = \\
    & \underset{x \sim X, t \sim T}{\mathbb{E}} \{- log [q_{C}(z|x)]\}
    - \underset{z \sim Z, s \sim S}{\mathbb{E}} \{- log [q_{D}(s|z=\mathcal{C}(x))]\}
\end{aligned}
\end{equation*}

In words, the training process of the classifier ($\mathcal{C}$) tries to minimize the log likelihood of predicting $s$ from $z$, while maximizing the log likelihood of predicting $z$ from $x$. At the same time, the goal of the discriminator ($\mathcal{D}$) is to maximize the log likelihood of predicting $s$ from $z$. Recall that, similar to CLP and CLS \cite{kannan2018adversarial}, ZK-GanDef uses inputs ($x$) perturbed with random Gaussian noise as an approximation of true adversarial examples.

The pseudocode for training of ZK-GanDef is shown in Algorithm \ref{algorithm:defense}. During the sampling in lines \ref{alg2:sample1} and \ref{alg2:sample2}, a number (predefined by user) of examples is evenly sampled from original images $\bar{X}$ and examples with Gaussian perturbations $\hat{X}$ to form a training batch. In lines \ref{alg2:freeze1} and \ref{alg2:freeze2}, the weight parameters in the classifier (discriminator) are frozen before updating the weight parameters in the discriminator (classifier). Finally, in lines \ref{alg2:update1} and \ref{alg2:update2}, the weight parameters are updated through the stochastic gradient descent algorithm. In this algorithm, we iteratively update the classifier and the discriminator one at a time to emulate the proposed minimax game.

\begin{algorithm} \caption{Training ZK-GanDef} \label{algorithm:defense}
\begin{algorithmic}[1]
\Require training data $X$, ground truth $T$, classifier $\mathcal{C}$, discriminator $\mathcal{D}$
\Ensure classifier $\mathcal{C}$, discriminator $\mathcal{D}$
\State Initialize weight parameters $\Omega$ in both classifier and discriminator
\For {the global training iterations}
    \For {the discriminator training iterations}
        \State Sample a batch of training pair, $\langle x, t \rangle$ \label{alg2:sample1}
        \State Generate a batch of boolean indicator, $s$, corresponding to training inputs \label{alg2:prepare}
        \State Fix $\Omega_{\mathcal{C}}$ in classifier $\mathcal{C}$ \label{alg2:freeze1}
        \State Update $\Omega_{\mathcal{D}}$ in discriminator $\mathcal{D}$ \label{alg2:update1}
    \EndFor
    \State Sample a batch of training pair, $\langle x, t \rangle$ \label{alg2:sample2}
    \State Generate a batch of boolean indicator, $s$, corresponding to training inputs
    \State Fix $\Omega_{\mathcal{D}}$ in discriminator $\mathcal{D}$ \label{alg2:freeze2}
    \State Update $\Omega_{\mathcal{C}}$ in classifier $\mathcal{C}$ \label{alg2:update2}
\EndFor
\end{algorithmic}
\end{algorithm}

\subsection{Theoretical Analysis}

Given that $J$ is a combination of the log likelihood of $Z|X$ and $S|Z$, \textcolor{black}{we provide a mathematical intuition here that the solution of the minimax game is a classifier which makes correct predictions based on perturbation invariant features.} It is worth noting that our analysis is conducted in a non-parametric setting, which means that the classifier and the discriminator have enough capacity to model any distribution.

\begin{prop} \label{prop:1}
    If there exists a solution $(\mathcal{C}^{*}, \mathcal{D}^{*})$ for the aforementioned minmax game $J$ such that $J(\mathcal{C}^{*}, \mathcal{D}^{*}) = H(Z|X) - H(S)$, then $\mathcal{C}^{*}$ is an optimal classifier which correctly classifies adversarial inputs.
\end{prop}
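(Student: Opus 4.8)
The plan is to treat $J$ as a standard GAN-style minimax value and resolve it in two stages, mirroring the pivot-learning analysis of \cite{louppe2017learning}: first fix $\mathcal{C}$ and solve the inner maximization over $\mathcal{D}$, then minimize the resulting expression over $\mathcal{C}$, and finally read off the equality conditions. For the inner step, note that only the second term of $J$ depends on $\mathcal{D}$, so maximizing $J$ over $\mathcal{D}$ is the same as minimizing the cross-entropy $\mathbb{E}\{-\log q_{D}(s|z)\}$. Invoking the non-parametric assumption that $\mathcal{D}$ can represent any conditional distribution, Gibbs' inequality (nonnegativity of the KL divergence) shows the minimizer is the true posterior $q_{D}^{*}(s|z)=p(s|z)$ with minimal value $H(S|Z)$. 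Hence $\max_{\mathcal{D}} J(\mathcal{C},\mathcal{D}) = \mathbb{E}\{-\log q_{C}(z|x)\} - H(S|Z)$.

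Next I would bound the two surviving terms separately over $\mathcal{C}$. The first is again a cross-entropy, so $\mathbb{E}\{-\log q_{C}(z|x)\} \geq H(Z|X)$, with equality exactly when $q_{C}$ reproduces the true conditional of the target given the input, i.e.\ when $\mathcal{C}$ is Bayes-optimal. For the second, conditioning cannot increase entropy, so $H(S|Z) \leq H(S)$, with equality exactly when $Z$ and $S$ are independent, i.e.\ when the logits reveal nothing about whether the input was clean or perturbed. Adding these gives $\max_{\mathcal{D}} J \geq H(Z|X) - H(S)$ for every $\mathcal{C}$. Crucially this lower bound is a constant, so the hypothesis $J(\mathcal{C}^{*},\mathcal{D}^{*}) = H(Z|X)-H(S)$ forces both inequalities to be tight simultaneously at $\mathcal{C}^{*}$.

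It remains to interpret the two equalities. Tightness of the first yields optimal classification accuracy; tightness of the second yields $Z \perp S$, so $\mathcal{C}^{*}$ depends only on perturbation-invariant features. Because a clean input and its perturbed counterpart differ only in the source label $s$, independence forces $\mathcal{C}^{*}$ to produce identical logits on both, while optimality forces those logits to be correct; thus $\mathcal{C}^{*}$ classifies perturbed inputs as accurately as clean ones. Appealing to the empirical premise from \cite{kannan2018adversarial}, cited earlier, that robustness to Gaussian perturbations carries over to adversarial perturbations, we conclude that $\mathcal{C}^{*}$ correctly classifies adversarial inputs.

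I expect the main obstacle to be conceptual rather than computational. The clean decoupling in the second stage works only because $H(Z|X)-H(S)$ is independent of $\mathcal{C}$, which requires reading $H(Z|X)$ as the fixed irreducible uncertainty of the target given the data (not as an entropy of the classifier-dependent logit distribution) and $H(S)$ as the fixed entropy of the sampling indicator; I would state this identification explicitly before invoking the bounds. The second delicate point is not formal at all: the step from pivotality $Z \perp S$ to genuine adversarial robustness is an extrapolation justified only by the stated empirical assumption, so the final paragraph should be framed as intuition, consistent with the paper's own "mathematical intuition" phrasing.
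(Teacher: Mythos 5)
Your proposal follows essentially the same route as the paper's own proof: fix $\mathcal{C}$, show the optimal discriminator recovers $p(s|z)$ so the second term becomes $H(S|Z)$, then lower-bound the two terms by $H(Z|X)$ and $-H(S)$ respectively and read off the two equality conditions (Bayes-optimality of $\mathcal{C}^{*}$ and independence of $Z$ and $S$). Your added caveats --- that $H(Z|X)$ must be read as a classifier-independent constant and that the final step to adversarial robustness rests on the empirical premise of \cite{kannan2018adversarial} --- are sound and, if anything, make the argument more honest than the paper's version.
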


\begin{proof} For any fixed classification model $\mathcal{C}$, the optimal discriminator can be formulated as
\begin{equation*}
\begin{aligned}
    \mathcal{D}^{*} &= \text{arg} ~ \underset{\mathcal{D}}{\text{max}} ~ J(\mathcal{C}, \mathcal{D}) \\
    &= \text{arg} ~ \underset{\mathcal{D}}{\text{min}} ~ \underset{z \sim Z, s \sim S}{\mathbb{E}} \{- log [q_{D}(s|z=\mathcal{C}(x))]\}
\end{aligned}
\end{equation*}
In this case, the discriminator can perfectly model the conditional distribution and we have $q_{D}(s|z=\mathcal{C}(x)) = p(s|z=\mathcal{C}(x))$ for all $z$ and all $s$. Therefore, we can rewrite $J$ with optimal discriminator as $J'$ and denote the second half of $J$ as a conditional entropy $H(S|Z)$
\begin{equation*}
\begin{aligned}
    & J'(\mathcal{C}) = \underset{x \sim X, t \sim T}{\mathbb{E}} \{- log [q_{C}(z|x)]\} - H(S|Z)
\end{aligned}
\end{equation*}
For the optimal classification model, the goal is to achieve the conditional probability $q_{C}(z|x) = p(z|x)$ since $z$ can determine $t$ by taking softmax transformation. Therefore, the first part of $J'(\mathcal{C})$ (the expectation) is larger than or equal to $H(Z|X)$. Combined with the basic property of conditional entropy that $H(S|Z) \leq H(S)$, we can get the following lower bound of $J$ with optimal classifier and discriminator
\begin{equation*}
\begin{aligned}
    & J(\mathcal{C}^{*}, \mathcal{D}^{*}) \geq H(Z|X) - H(S|Z) \geq H(Z|X) - H(S)
\end{aligned}
\end{equation*}
This equality holds when the following two conditions are satisfied:
\begin{itemize}
    \item The classifier perfectly models the conditional distribution of $z$ given $x$, $q_{C}(z|x) = p(z|x)$, which means that $\mathcal{C}^{*}$ is an optimal classifier.
    \item The $S$ and $Z$ are independent, $H(S|Z) = H(S)$, which means that perturbations do not affect pre-softmax logits.
\end{itemize}
\end{proof}

In practice, the assumption of unlimited capacity in classifier and discriminator may not hold and it would be hard or even impossible to build an optimal classifier which outputs pre-softmax logits independent from adversarial perturbations. Therefore, we introduce a trade-off hyper-parameter $\gamma$ into the minimax function as follows:
\begin{equation*}
\begin{aligned}
    & \underset{x \sim X, t \sim T}{\mathbb{E}} \{- log [q_{C}(z|x)]\}  - \gamma \underset{z \sim Z, s \sim S}{\mathbb{E}} \{- log [q_{D}(s|z=\mathcal{C}(x))]\}
\end{aligned}
\end{equation*}
When $\gamma = 0$, ZK-GanDef is the same as traditional adversarial training. When $\gamma$ increases, the discriminator becomes more and more sensitive to information of $s$ contained in pre-softmax logits, $z$.

Based on previous proof, ZK-GanDef achieves feature selection through the design of minimax game with discriminator. By selecting perturbation invariant features, the classifier could defend against adversarial examples since they are also combinations of original image and adversarial perturbations.

\section{Evaluation Settings} \label{sec:experiment}

\begin{figure}[tb]
\centering
\begin{minipage}[c]{.4\textwidth}
\centering
    \includegraphics[width=\linewidth]{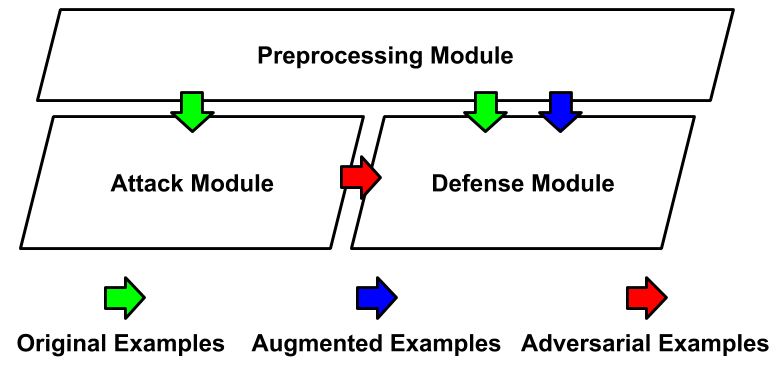}
    \caption{Evaluation Framework}
    \label{fig:framework}
\end{minipage}
\end{figure}

This section presents the framework that we use to evaluate our defensive method, ZK-GanDef, under different popular adversarial example generators and compare it with other state-of-the-art zero knowledge adversarial training defenses. Figure \ref{fig:framework} depicts the main components of this framework, which include: (1) Preprocessing module, (2) Attack module and (3) Defense module. Different adversarial example generators and defensive methods could be used as plug-ins to Attack and Defense modules respectively, to form different test scenarios.

In the following subsections, we present the datasets utilized, the detailed description of each module, and a summary of the evaluation metrics used.

\subsection{Datasets}

During our evaluations, the following datasets are utilized:
\begin{itemize}
    \item MNIST: Contains a total of 70K images and their labels. Each one is a $28 \times 28$ pixel, gray scale labeled image of handwritten digit.
    \item Fashion-MNIST: Contains a total of 70K images and their labels. Each one is a $28 \times 28$ pixel, gray scale labeled image of different kinds of clothes.
    \item CIFAR10: Contains a total of 60K images and their labels. Each one is a $32 \times 32$ pixel, RGB labeled image of animal or vehicle.
\end{itemize}
The images in each dataset are evenly labeled into 10 different classes. Although Fashion-MNIST has exactly the same image size as MNIST, images in Fashion-MNIST have far more details than images from MNIST.

\subsection{Preprocessing Module}

Preprocessing module involves the following operations:
\begin{itemize}
    \item Scaling: Gray scale images use one integer to represent each of their pixels, while RGB images use three different integers (each between 0 and 255) to represent each of their pixels. To simplify the process of finding adversarial examples and to be consistent with the related work, scaling is used to map pixel representations from discrete integers in the range $\mathbb{Z}_{[0,255]}$ into real numbers in the range $\mathbb{R}_{[-1,1]}$.
    \item Separation: This operation is used to split each input dataset into two groups: training-dataset and testing-dataset. The training dataset is used to train the supervised machine learning models which are the different NN classifiers in this work, while the testing dataset is used by the attack module to generate adversarial examples in order to evaluate the NN classifier under test. The detailed separation plans are: (1) the 70K MNIST and Fashion-MNIST images are randomly separated into 60K training and 10K testing images, respectively and (2) the 60K CIFAR10 images are randomly separated into 50K training and 10K testing images.
    \item Augmentation: This operation is used to generate augmented examples for different zero knowledge adversarial training methods. Based on the description in \cite{kannan2018adversarial} and our communication with its authors, we keep the same augmentation which is adding a Gaussian perturbation with mean $\mu = 0$ and standard deviation $\sigma = 1$. The Gaussian perturbation used in this work is not guaranteed to be the optimal choice and we keep the detailed comparison of different augmentation methods as future work.
\end{itemize}

\subsection{Attack Module}

The attack module implements three popular adversarial example generators, the FGSM \cite{goodfellow2014explaining}, the BIM \cite{kurakin2016adversarial} and the PGD \cite{madry2017towards}. \textcolor{black}{As we mention in the previous section, all adversarial example generators are utilized under the white-box scenario. Moreover, each original example has its own corresponding adversarial counterparts (FGSM, BIM, PGD).} Adversarial examples from same dataset share same maximum $l_{\infty}$ perturbation limits which are 0.6 in MNIST \& Fashion-MNIST and 0.06 in CIFAR10. For the BIM, we also limit the per step perturbation to 0.1 in MNIST \& Fashion-MNIST and 0.016 in CIFAR10. Finally, for the PGD, we run generation algorithm 40 iteration with 0.02 per step perturbation on MNIST \& Fashion-MNIST and 20 iteration with 0.016 per step perturbation on CIFAR10. To ensure the quality of the adversarial example generators, we choose the standard python library, CleverHans \cite{papernot2018cleverhans}, which is adopted by the community.

\subsection{Defense Module}

This module implements the Vanilla NN classifiers as well as the different defense methods that we evaluate in this work. For the same dataset, different defense methods share the same structure of the classifier as that of the Vanilla. \textcolor{black}{Hyper-parameters of defenses we compare with are the exact ones used in their original papers. Our ZK-GanDef is tuned by line search to find a suitable hyper-parameter setting.}

\subsubsection{Vanilla Classifier}

For each dataset, we use as a baseline a NN classifier with no defenses, which is also referred to as the Vanilla classifier. We select different Vanilla classifiers for each dataset. The structure of the Vanilla classifier used in MNIST and Fashion-MNIST dataset is LeNet \cite{madry2017towards}. For the CIFAR10 dataset, we use the allCNN based classifier \cite{springenberg2014striving}. Due to space limitations, the detailed NN structure and training settings are not listed.

\subsubsection{Zero Knowledge Defenses}

We implement here three different approaches: (1) a classifier trained with CLP \cite{kannan2018adversarial}, (2) a classifier trained with CLS \cite{kannan2018adversarial}, and (3) a classifier trained with ZK-GanDef. As Figures \ref{fig:clp-train} and \ref{fig:cls-train} show, CLP and CLS train only with randomly perturbed examples. On the other hand, ZK-GanDef (Figure \ref{fig:zkg-train}) trains with both original and randomly perturbed examples. We note also that the structure of the discriminator in ZK-GanDef (Table \ref{table:discriminator-structure}) does not change with different datasets. Training of the discriminator utilizes the Adam optimizer \cite{kingma2014adam} with 0.001 learning rate.

\subsubsection{Full Knowledge Defenses}

We implement here three of the full knowledge defenses: (1) a classifier trained with original and FGSM examples (FGSM-Adv), (2) a classifier trained with original and PGD examples (PGD-Adv), and (3) a classifier trained with original and PGD examples through GAN based training (PGD-GanDef). Among them, PGD-Adv is the state-of-the-art full knowledge adversarial training defense.

\begin{table}[tb]
    \begin{center}
    \begin{tabular}{c | c | c | c | c }
    \hline \hline
    Layer & Kernel Size & Strides & Padding & Activation \\
    \hline \hline
    Dense & 32 & - & - & ReLU \\
    \hline
    Dense & 64 & - & - & ReLU \\
    \hline
    Dense & 32 & - & - & ReLU \\
    \hline
    Dense & 1 & - & - & Sigmoid \\
    \hline \hline
    \end{tabular}
    \end{center}
    \caption{Discriminator Structure}
    \label{table:discriminator-structure}
\end{table}

\subsection{Evaluation Metrics}

The overall classifier performance is captured by the \textit{test accuracy} metric, which is defined as the ratio of the total number of tested images minus the number of failed tests to the total number of tested images (both original and adversarial). 
\begin{equation*}
\begin{aligned}
    \text{test accuracy} \equiv \frac{\text{total \# of test examples}~ - ~\text{\# of failed tests}}{\text{total \# of test examples}}
\end{aligned}
\end{equation*}
A test is considered failed when: (1) original example is missclassifed, (2) original example is rejected, or (3) adversarial example is accepted with incorrect classification. To be more precise during evaluation, we separately compute the test accuracy on original and adversarial examples. When a defensive method tries to maximize classifier's capability to identify adversarial examples, the classifier may reject or missclassify more original examples than the corresponding Vanilla classifier. The trade-off between correctly classifying original and adversarial examples is the same as the trade-off between \textit{true positive rate} and \textit{true negative rate} in machine learning.

The other important metric to evaluate defense approaches is the training time it takes to build the model. As mentioned earlier, a significant amount of computation is consumed to generate the adversarial examples for full knowledge adversarial training. The two main contributing factors to the training time are: (1) the structure of the classifier (number of layers and parameters) and (2) the searching algorithm of adversarial examples (e.g., single-step vs. iterative approaches). The goal is to minimize the training time while maintaining acceptable test accuracy.

\section{Experimental Results} \label{sec:results}

\begin{figure*}[p]
\centering
\begin{minipage}[c]{.24\textwidth}
\centering
    \includegraphics[width=\linewidth]{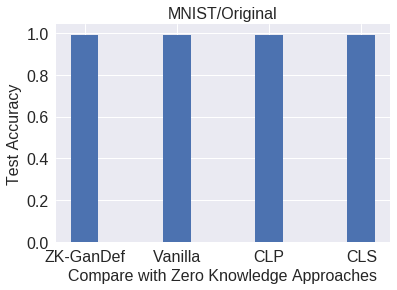}
    \label{fig:mnist-zero-ori}
\end{minipage}
\begin{minipage}[c]{.24\textwidth}
\centering
    \includegraphics[width=\linewidth]{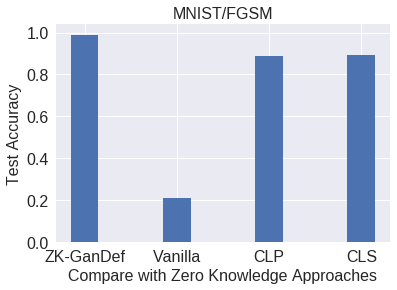}
    \label{fig:mnist-zero-fgsm}
\end{minipage}
\begin{minipage}[c]{.24\textwidth}
\centering
    \includegraphics[width=\linewidth]{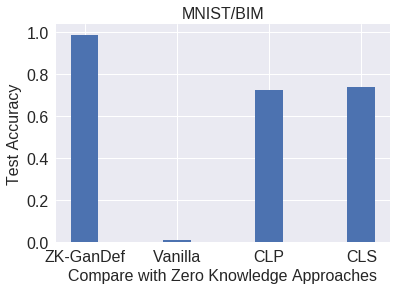}
    \label{fig:mnist-zero-bim}
\end{minipage}
\begin{minipage}[c]{.24\textwidth}
\centering
    \includegraphics[width=\linewidth]{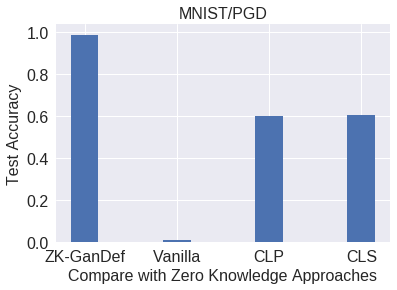}
    \label{fig:mnist-zero-pgd}
\end{minipage}
\begin{minipage}[c]{.24\textwidth}
\centering
    \includegraphics[width=\linewidth]{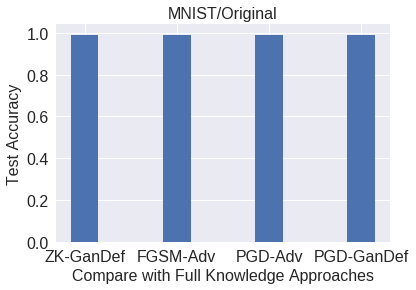}
    \label{fig:mnist-full-ori}
\end{minipage}
\begin{minipage}[c]{.24\textwidth}
\centering
    \includegraphics[width=\linewidth]{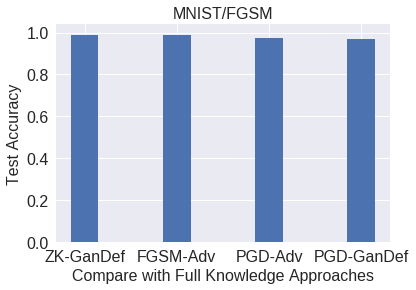}
    \label{fig:mnist-full-fgsm}
\end{minipage}
\begin{minipage}[c]{.24\textwidth}
\centering
    \includegraphics[width=\linewidth]{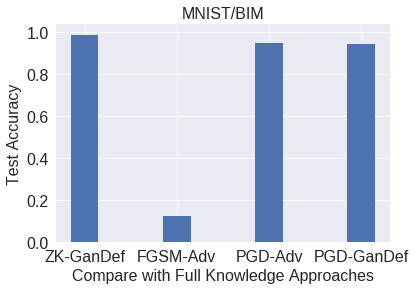}
    \label{fig:mnist-full-bim}
\end{minipage}
\begin{minipage}[c]{.24\textwidth}
\centering
    \includegraphics[width=\linewidth]{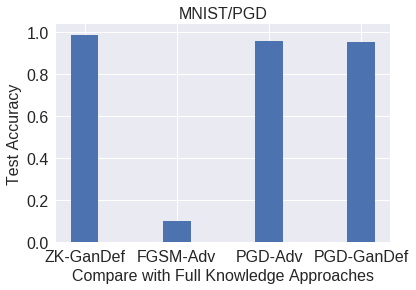}
    \label{fig:mnist-full-pgd}
\end{minipage}
\begin{minipage}[c]{.24\textwidth}
\centering
    \includegraphics[width=\linewidth]{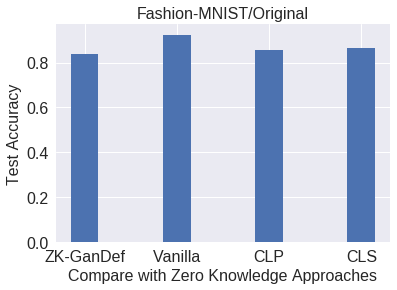}
    \label{fig:fmnist-zero-ori}
\end{minipage}
\begin{minipage}[c]{.24\textwidth}
\centering
    \includegraphics[width=\linewidth]{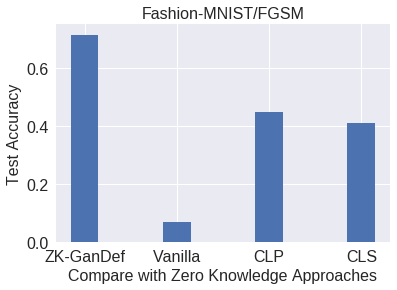}
    \label{fig:fmnist-zero-fgsm}
\end{minipage}
\begin{minipage}[c]{.24\textwidth}
\centering
    \includegraphics[width=\linewidth]{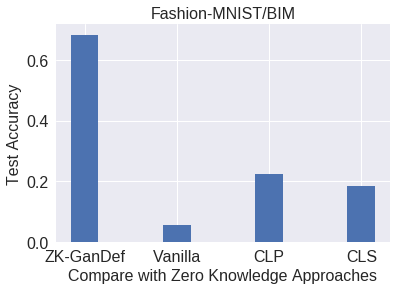}
    \label{fig:fmnist-zero-bim}
\end{minipage}
\begin{minipage}[c]{.24\textwidth}
\centering
    \includegraphics[width=\linewidth]{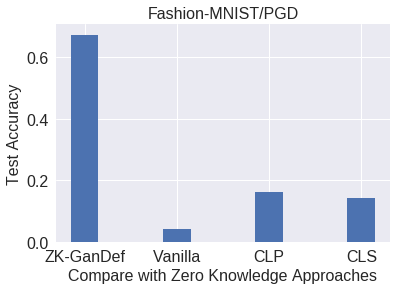}
    \label{fig:fmnist-zero-pgd}
\end{minipage}
\begin{minipage}[c]{.24\textwidth}
\centering
    \includegraphics[width=\linewidth]{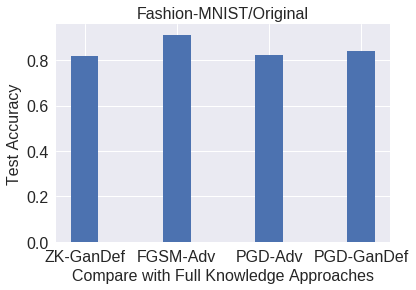}
    \label{fig:fmnist-full-ori}
\end{minipage}
\begin{minipage}[c]{.24\textwidth}
\centering
    \includegraphics[width=\linewidth]{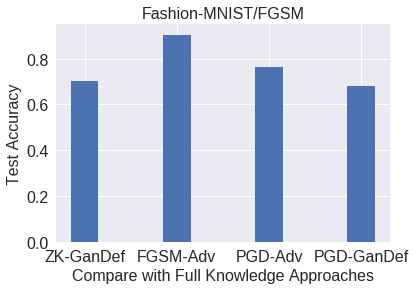}
    \label{fig:fmnist-full-fgsm}
\end{minipage}
\begin{minipage}[c]{.24\textwidth}
\centering
    \includegraphics[width=\linewidth]{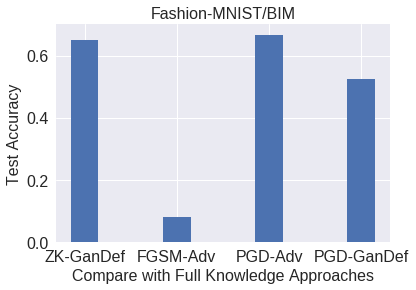}
    \label{fig:fmnist-full-bim}
\end{minipage}
\begin{minipage}[c]{.24\textwidth}
\centering
    \includegraphics[width=\linewidth]{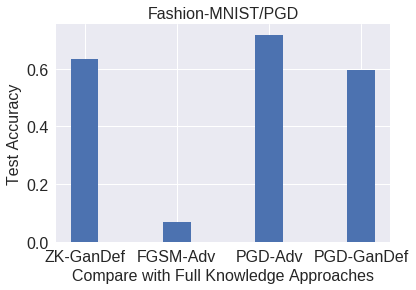}
    \label{fig:fmnist-full-pgd}
\end{minipage}
\begin{minipage}[c]{.24\textwidth}
\centering
    \includegraphics[width=\linewidth]{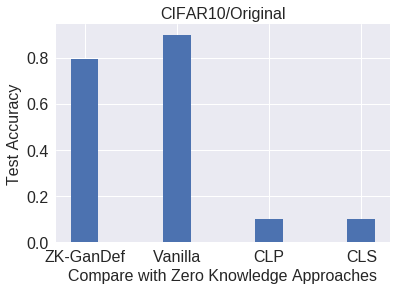}
    \label{fig:cifar-zero-ori}
\end{minipage}
\begin{minipage}[c]{.24\textwidth}
\centering
    \includegraphics[width=\linewidth]{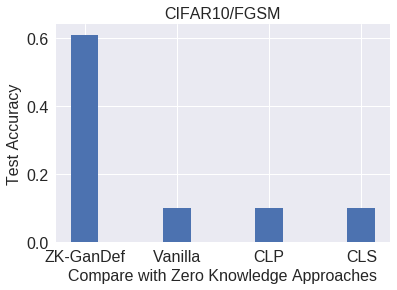}
    \label{fig:cifar-zero-fgsm}
\end{minipage}
\begin{minipage}[c]{.24\textwidth}
\centering
    \includegraphics[width=\linewidth]{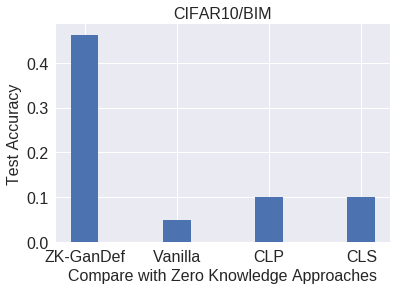}
    \label{fig:cifar-zero-bim}
\end{minipage}
\begin{minipage}[c]{.24\textwidth}
\centering
    \includegraphics[width=\linewidth]{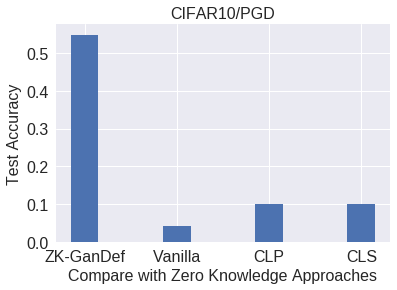}
    \label{fig:cifar-zero-pgd}
\end{minipage}
\begin{minipage}[c]{.24\textwidth}
\centering
    \includegraphics[width=\linewidth]{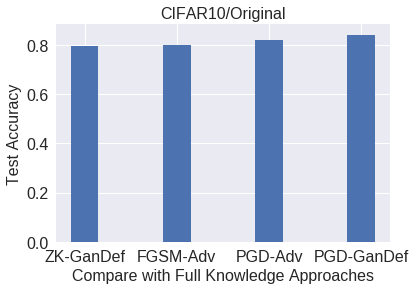}
    \label{fig:cifar-full-ori}
\end{minipage}
\begin{minipage}[c]{.24\textwidth}
\centering
    \includegraphics[width=\linewidth]{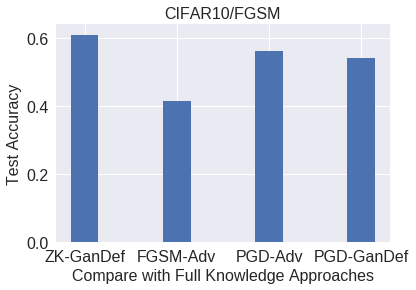}
    \label{fig:cifar-full-fgsm}
\end{minipage}
\begin{minipage}[c]{.24\textwidth}
\centering
    \includegraphics[width=\linewidth]{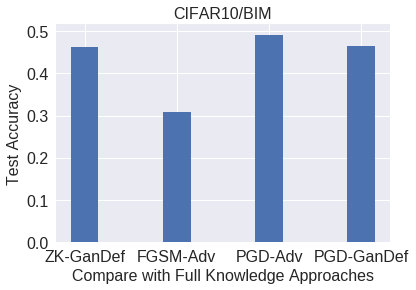}
    \label{fig:cifar-full-bim}
\end{minipage}
\begin{minipage}[c]{.24\textwidth}
\centering
    \includegraphics[width=\linewidth]{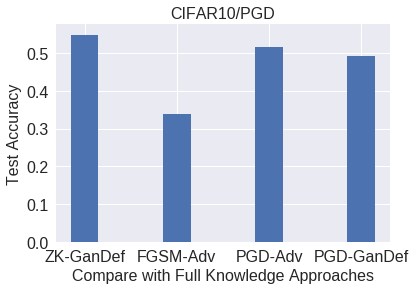}
    \label{fig:cifar-full-pgd}
\end{minipage}
\caption{Test Accuracy on Different Examples (\textit{In the $1^{st}$ and $2^{nd}$ rows, the results on MNIST dataset are presented. In the $3^{rd}$ and $4^{th}$ rows, the results on Fashion-MNIST dataset are presented. In the $5^{th}$ and $6^{th}$ rows, the results on CIFAR10 dataset are presented. The results in odd number rows compare the proposed ZK-GanDef with Vanilla as well as existing zero knowledge methods, CLP and CLS. The results in even number rows compare the proposed ZK-GanDef with full knowledge defenses which include FGSM-Adv, PGD-Adv and PGD-GanDef.})}
\label{fig:test-acc}
\end{figure*}
\begin{table*}[tb]
    \begin{center}
    \begin{tabular}{ c | c  c  c  c | c  c  c  c | c  c  c  c }
    \hline \hline
    & \multicolumn{4}{c |}{MNIST} & \multicolumn{4}{c |}{Fashion-MNIST} & \multicolumn{4}{c}{CIFAR10} \\
    & Original & FGSM & BIM & PGD & Original & FGSM & BIM & PGD & Original & FGSM & BIM & PGD \\
    \hline
    Vanilla & 98.92\% & 21.01\% & 1.00\% & 0.77\% & 92.43\% & 7.01\% & 5.62\% & 4.06\% & 89.92\% & 9.97\% & 4.93\% & 4.06\% \\
    CLP & 99.13\% & 88.70\% & 72.61\% & 59.93\% & 85.65\% & 44.78\% & 22.30\% & 16.14\% & 10.00\%\footnotemark & 10.00\%\footnotemark[\value{footnote}] & 10.00\%\footnotemark[\value{footnote}] & 10.00\%\footnotemark[\value{footnote}] \\
    CLS & 99.24\% & 89.29\% & 73.84\% & 60.63\% & 86.37\% & 41.14\% & 18.55\% & 14.17\% & 10.00\%\footnotemark[\value{footnote}] & 10.00\%\footnotemark[\value{footnote}] & 10.00\%\footnotemark[\value{footnote}] & 10.00\%\footnotemark[\value{footnote}] \\
    ZK-GanDef & 98.95\% & 98.97\% & 98.89\% & 98.71\% & 81.95\% & 70.19\% & 64.97\% & 63.34\% & 79.33\% & 60.91\% & 46.27\% & 54.85\% \\
    FGSM-Adv & 99.07\% & 98.79\% & 12.24\% & 9.73\% & 91.17\% & 90.48\% & 7.97\% & 6.81\% & 79.88\% & 41.53\% & 30.74\% & 33.86\% \\
    PGD-Adv & 99.15\% & 97.60\% & 94.75\% & 95.60\% & 82.33\% & 76.42\% & 66.72\% & 71.80\% & 82.06\% & 56.18\% & 49.21\% & 51.51\% \\
    PGD-GanDef & 99.10\% & 96.85\% & 94.28\% & 95.31\% & 84.09\% & 68.19\% & 52.35\% & 59.51\% & 84.05\% & 54.14\% & 46.64\% & 49.21\% \\
    \hline \hline
    \end{tabular}
    \end{center}
    \caption{Test Accuracy on Different Examples (\textit{The left column shows the test results on MNIST dataset. The middle column shows the test results on Fashion-MNIST dataset. The right column shows the test results on CIFAR10 dataset.})}
    \label{table:summary-test-accuracy}
    \vspace{-3mm}
\end{table*}

In this section, we present comparative evaluation results of the ZK-GanDef with other state-of-the-art zero knowledge as well as full knowledge adversarial training defenses introduced previously. The evaluation results are summarized in three subsections. In the first subsection, we provide comparative evaluation of ZK-GanDef with other zero knowledge and full knowledge adversarial training defenses on classifying original and different types of adversarial examples. Then, we compare the computational consumption of ZK-GanDef with other full knowledge adversarial training defenses in terms of training time per epoch. In the third subsection, we analyze the convergence issues of CLP and CLS on CIFAR10 dataset.

\subsection{Test Accuracy on Different Examples}

In this subsection, we show the test accuracy of the Vanilla classifier and the classifiers with defenses against different types of examples. As mentioned earlier, the experiments are conducted on MNIST, Fashion-MNIST and CIFAR10 datasets. For each dataset, a total of 28 different results are calculated. These results span all possible pairs of 7 different classifiers (Vanilla, CLP, CLS, ZK-GanDef, FGSM-Adv, PGD-Adv and PGD-GanDef) and 4 different kinds of examples (original, FGSM, BIM and PGD). All the validation results are presented in Figure \ref{fig:test-acc} and detailed in Table \ref{table:summary-test-accuracy}.

\subsubsection{On Original Examples}

In Figure \ref{fig:test-acc}, we first focus on the results presented in the first column sub-figures. These results represent the test accuracy on original examples from different datasets. As a baseline, the Vanilla classifier achieves 98.92\% test accuracy on MNIST, 92.43\% test accuracy on Fashion-MNIST and 89.92\% test accuracy on CIFAR10. These results are consistent with the benchmark ones presented in \cite{benchmark-list}.

We then evaluate the test accuracy of the three zero knowledge defenses (CLP, CLS and ZK-GanDef) on different datasets. On MNIST dataset, their test accuracy on original examples is at the same level as that of the Vanilla classifier. The detailed results from Table \ref{table:summary-test-accuracy} show that the difference in test accuracy among the defenses is within 0.5\%, which is small enough to be ignored. On Fashion-MNIST dataset, the test accuracy of CLP and CLS is 5\% higher than that of ZK-GanDef on original examples. Moreover, the test accuracy of all zero knowledge approaches is (6\% to 11\%) lower than that of the Vanilla classifier. This small degeneration is a result of tuning the model to enhance test accuracy on adversarial examples. On CIFAR10 dataset, CLP and CLS have a significantly lower test accuracy compared with the Vanilla classifier and ZK-GanDef. This is because the classifiers with the CLP and CLS methods do not converge at the beginning of the training. A detailed study of this phenomenon is provided in the following subsection.
\footnotetext{On CIFAR10 dataset, CLP and CLS have convergence issues during training and hence the classifier is making random guessing. A detailed study of this issue is provided in a following subsection.}

Finally, we conduct the same evaluation with full knowledge adversarial training defenses and perform comparison with the proposed ZK-GanDef. On MNIST dataset, all full knowledge defenses and ZK-GanDef achieve the same level of test accuracy as that of the Vanilla classifier. On Fashion-MNIST dataset, FGSM-Adv achieves similar test accuracy on original examples to that of the Vanilla classifier, while ZK-GanDef, PGD-Adv and PGD-GanDef have about 10\% to 12\% degeneration from that of the Vanilla classifier. On CIFAR10 dataset, ZK-GanDef performance is similar to that of full knowledge defenses and their test accuracy on original examples are 6\% to 10\% lower than that of the Vanilla classifier, respectively.  To enhance test accuracy on adversarial examples, the decision boundary of the classifier becomes complex with more curves, which causes the degeneration on classifying original examples compared to the Vanilla classifier \cite{madry2017towards}.

\subsubsection{On Single-step Adversarial Examples}

We discuss here the accuracy results on FGSM examples, which are depicted on sub-figures on the second column of Figure \ref{fig:test-acc}. Intuitively, the Vanilla classifier has poor performance on these single-step adversarial examples, with test accuracy of 21.01\% on MNIST, 7.01\% on Fashion-MNIST, and 9.97\% on CIFAR10.

Compared with the Vanilla classifier, all zero knowledge defenses achieve a significant enhancement in terms of test accuracy on all datasets, with the exception of CLP and CLS on CIFAR10 dataset. Among the zero knowledge approaches, ZK-GanDef achieves the highest test accuracy on all the datasets with significant margin. On MNIST, the test accuracy is 88.70\%, 89.29\%, and 98.97\% with CLP, CLS, and ZK-GanDef, respectively. On Fashion-MNIST, the test accuracy is 44.78\%, 41.14\%, and 70.19\% CLP, CLS, and ZK-GanDef, respectively. On CIFAR10, ZK-GanDef is the only zero knowledge defense that reasonably works test accuracy around 60.91\%.

In general, full knowledge approaches have better understanding of the adversarial examples since such examples are part of their training datasets. Therefore, full knowledge approaches should, intuitively, have better test accuracy compared to their zero knowledge counterparts. Our results confirm this observations. The results show that the test accuracy of full knowledge approaches is significantly higher than those of CLP and CLS, especially on Fashion-MNIST and CIFAR10 datasets. On the other hand, the test accuracy of ZK-GanDef is comparable to those of full knowledge defenses. In fact, the test accuracy of ZK-GanDef (98.97\%) is higher than those of all the full knowledge defenses (98.79\%, 97.6\% and 96.85\%). This is because handwritten digits in MNIST are gray scale figures with no detailed texture, and therefore, ZK-GanDef can train to select strongly denoised (even binarized) features without losing information. As a result, ZK-GanDef can achieve even higher test accuracy than full knowledge approaches. 

On Fashion-MNIST, FGSM-Adv achieves the highest test accuracy (90.48\%). The PGD-Adv, PGD-GanDef and ZK-GanDef achieve the second tier test accuracy (76.42\%, 68.19\% and 70.19\%). This is because FGSM-Adv utilizes only original and FGSM examples during training, and therefore, the trained classifier is overfitting on FGSM examples. This behavior has been observed in \cite{tramer2017ensemble} and denoted as gradient masking effect. On CIFAR10, PGD-Adv, PGD-GanDef and ZK-GanDef achieve comparable test accuracy (56.18\%, 54.14\% and 60.19\%, respectively), while the test accuracy of FGSM-Adv is only at 41.53\%. Due to the input dropout in allCNN classifier, the diversity of training data is enhanced and the overfitting of FGSM-Adv is inhibited \cite{tramer2017ensemble}. However, FGSM examples are generated with the weaker  single-step method, and hence the test accuracy degenerates on the stronger iterative examples.\\

%
\begin{figure*}[tb]
\centering
\begin{minipage}[c]{.3\textwidth}
\centering
    \includegraphics[width=\linewidth]{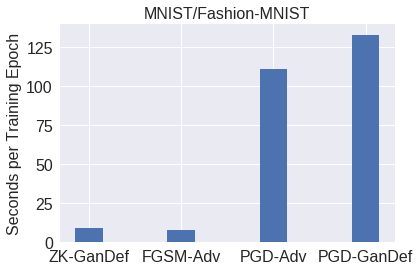}
    \label{fig:time-model-1}
\end{minipage}
\begin{minipage}[c]{.3\textwidth}
\centering
    \includegraphics[width=\linewidth]{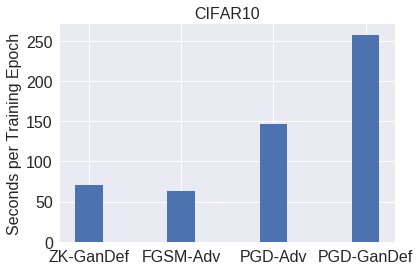}
    \label{fig:time-model-2}
\end{minipage}
\begin{minipage}[c]{.3\textwidth}
\centering
    \includegraphics[width=\linewidth]{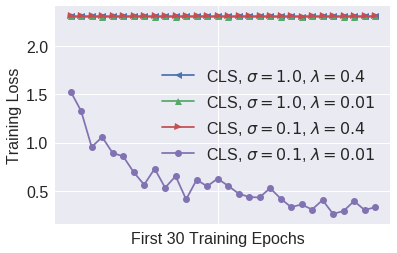}
    \label{fig:loss-cls}
\end{minipage}
\caption{Training Time and Training Loss \textit{The left sub-figure is training time on MNIST and Fashion-MNIST. The middle sub-figure is training time on CIFAR10. The right sub-figure is the training loss of CLS under different hyper-parameters.}}
\label{fig:training-time-loss}
\vspace{-3mm}
\end{figure*}

\subsubsection{On Iterative Adversarial Examples}
We analyze here the test accuracy results on BIM and PGD examples, which are depicted on the sub-figures of the third and the fourth columns of Figure \ref{fig:test-acc}, respectively. The figure clearly shows that the Vanilla classifier completely failed with both BIM and PGD examples. This is because BIM and PGD are iterative adversarial examples and hence are carefully crafted to mislead Vanilla classifiers.


Based on the test accuracy results, using zero knowledge defenses could still enhance the performance on these stronger adversarial examples. However, these enhancements are lower than those on FGSM examples. Among zero knowledge defenses, the test accuracy of ZK-GanDef is significantly higher than those of CLP and CLS on all iterative adversarial examples. On MNIST, the test accuracy of ZK-GanDef with BIM and PGD examples is 25\% and 38\%, respectively, higher than those of CLP and CLS. On Fashion-MNIST, the test accuracy of ZK-GanDef on BIM and PGD examples is 42\% and 47\%, respectively, higher than those of CLP and CLS. On CIFAR10, only ZK-GanDef could work and it achieves 46.27\% and 54.85\% test accuracy on BIM and PGD examples, respectively.

As mentioned earlier, full knowledge defenses could achieve larger enhancement in test accuracy compared to the existing zero knowledge defenses, CLP and CLS. FGSM-Adv is the exception as evidenced by its poor performance in defending iterative adversarial examples due to the reasons we mentioned in the previous sub-section. 
On MNIST and Fashion-MNIST, the test accuracy of FGSM-Adv on BIM and PGD examples has a huge decrease from over 90\% to around 10\%. Although such huge decrease does not exist in the case of CIFAR10, the test accuracy of FGSM-Adv is clearly lower than that of PGD-Adv and PGD-GanDef. On all datasets, PGD-Adv and PGD-GanDef have much stable test accuracy with limited decrease of test accuracy on FGSM examples. More importantly, the results show that the test accuracy of ZK-GanDef is close to those of PGD-Adv and PGD-GanDef on iterative adversarial examples from the three datasets.

We summarize our findings from the results as: (i) ZK-GanDef is significantly better than existing zero knowledge defenses (CLP and CLS) due to its higher test accuracy on adversarial examples and its scalability to large datasets. This clearly supports our vision that utilizing a more flexible and sophisticated way to handle the pre-softmax logits (ZK-GanDef) is better than forcing the pre-softmax logits to be smooth at a small scale (CLP and CLS). (ii) The test accuracy of ZK-GanDef is comparable to that of the state-of-the-art full knowledge adversarial training defenses. This supports our hypothesis that using perturbation invariant features in the classifier could greatly enhance test accuracy on adversarial examples. (iii) On contrast with full knowledge defenses, ZK-GanDef is adaptable to new types of adversarial examples. We see from the results that FGSM-Adv has significant adaptability issue on MNIST and Fashion-MNIST datasets. This issue is not observed on CIFAR10 due to the input dropout in classifier structure \cite{tramer2017ensemble}. For PGD-Adv, the current evaluation does not show its adaptability issue, but it is not guaranteed given that stronger adversarial examples could be generated in the future \cite{tramer2017ensemble}\cite{samangouei2018defense}. On the other hand, the results show that ZK-GanDef has better adaptability to new types of adversarial examples because its training is independent of such examples. 

\subsection{Generalizability}
\textcolor{black}{
In the previous evaluation, all iterative adversarial examples are generated by methods based on projected gradient descent. In order to show the generalizability of ZK-GanDef, we conduct the evaluation on an extra set of adversarial examples, Deepfool \cite{moosavi2016deepfool} and Carlini \& Wagner (CW) examples \cite{carlini2016towards}. Unlike adversarial examples used in previous evaluation, Deepfool and CW adversarial examples contain perturbation patterns that are significantly different from Gaussian perturbation. Therefore, this evaluation could reveal the generalizability of ZK-GanDef in defending other adversarial examples. The evaluation is conducted on all three datasets. The Deepfool and CW adversarial examples utilize the same hyper-parameter setting as PGD adversarial examples.
}

\textcolor{black}{
The evaluation results are summarized in Table \ref{table:eval-deepfool-cw}. It is clear that ZK-GanDef can classify Deepfool adversarial examples with over 85\% accuracy in all three datasets which matches the test error presented in \cite{moosavi2016deepfool}. The reason is that Deepfool tries to find adversarial examples with smaller perturbation than projected gradient descent based adversarial examples (FGSM, BIM, PGD). As a result, Deepfool examples are easier to defend. For CW examples, ZK-GanDef achieves the same level of test accuracy on all three datasets. To conclude, ZK-GanDef is not limited to defend a specific type of perturbation. Although ZK-GanDef only utilizes Gaussian noise perturbation during training, its defense can be generalized to a wide range of adversarial examples which include FGSM, BIM, PGD, Deepfool and CW examples.
}

\textcolor{black}{
\begin{table}[tb]
    \begin{center}
    \begin{tabular}{ c  c | c  c | c  c }
    \hline \hline
    \multicolumn{2}{c |}{MNIST} & \multicolumn{2}{c}{Fashion-MNIST} & \multicolumn{2}{| c}{CIFAR10} \\
    Deepfool & CW & Deepfool & CW & Deepfool & CW \\
    \hline
    98.72\% & 98.46\% & 89.52\% & 66.43\% & 86.08\% & 47.22\% \\
    \hline \hline
    \end{tabular}
    \end{center}
    \caption{Test Accuracy on Deepfool and CW Examples}
    \label{table:eval-deepfool-cw}
\end{table}
}

\subsection{Training Time}

We evaluate here the training time of ZK-GanDef in terms of seconds per training epoch. MNIST and Fashion-MNIST share the same image size and classifier structure and hence has the same training time. Since the test accuracy of ZK-GanDef is significantly higher than those of the existing zero knowledge defenses, CLP and CLS, we only compare the training time of ZK-GanDef with those of full knowledge defenses (FGSM-Adv, PGD-Adv and PGD-GanDef). \textcolor{black}{We utilize a fixed number of training epochs (80 for MNIST and 300 for CIFAR10) and results show that all defensive methods converge at epoch 30 on MNIST and at epoch 240 on CIFAR10. Since the records of training time per epoch have a very small deviation, we take the average value of records in all epochs and compare different defense methods with it.} The results are recorded during the training on a workstation with a NVIDIA GTX 1080 GPU.

The left sub-figure of Figure \ref{fig:training-time-loss} shows that the training time of ZK-GanDef on MNIST/Fashion-MNIST (8.75s) is close to that of FGSM-Adv (7.83s), while it surges to 110.85s and 132.75s in the case of PGD-Adv and PGD-GanDef, respectively. The evaluation results on CIFAR10 dataset (the middle sub-figure of Figure \ref{fig:training-time-loss}) follow a similar trend to that of the results on MNIST and Fashion-MNIST datasets. ZK-GanDef and FGSM-Adv take much less training time per epoch (71.20s and 62.85s, respectively) compared to that of PGD-Adv (146.91s) and that of PGD-GanDef (257.72s). For example, on CIFAR10 dataset, the end-to-end training time of PGD-Adv takes 14.3 hours, while training of ZK-GanDef only takes 6.9 hours. In summary, ZK-GanDef provides test accuracy close to that of the best state-of-the art full knowledge defesnses (PGD-Adv), while reducing the training time by 92.11\% and 51.53\% on MNIST/Fashion-MNIST and CIFAR10, respectively.

\subsection{Convergence Issue}

As presented earlier, the evaluation results of CLP and CLS on CIFAR10 dataset show that these two zero knowledge adversarial training defenses fail to correctly classify both original and adversarial examples. 
This is mainly because the training loss of CLP and CLS does not converge during training. The mathematical models of CLP and CLS (section \ref{sec:defense}) follow the same design logic that aims at preventing over confident predictions. CLP achieves its goal by adding $l_{2}$ norm penalty on the difference of two randomly selected pre-softmax logits, while CLS adds $l_{2}$ norm penalty on any pre-softmax logits. Moreover, CLP and CLS do not include original examples in their training dataset, which means that they miss important features that can help discriminate examples with and without perturbations. Therefore, this design logic is too simple and lacks flexibility compared with ZK-GanDef, which utilizes minimax game with discriminator and trains on examples with and without perturbations. When training on complex datasets like CIFAR10, the simple and less flexible design logic leads to convergence issues for CLP and CLS. 

To further validate this conclusion, we record the loss of CLS during the first 30 training epochs and depict the results on the right sub-figure of Figure \ref{fig:training-time-loss}. The training loss is recorded on four different hyper-parameter settings of CLS: (1) normal CLS ($\sigma = 1.0, \lambda = 0.4$), (2) CLS with reduced perturbations ($\sigma = 1.0, \lambda = 0.01$), (3) CLS with reduced penalty ($\sigma = 0.1, \lambda = 0.4$), and (4) CLS with reduced perturbation and penalty ($\sigma = 0.1, \lambda = 0.01$). The figure shows that the curves of the first three settings overlap with each other and form the  horizontal curve on the top. This clearly shows that CLS does not learn any useful features and hence the training loss does not converge (does not decrease) under these three settings.  Under the last setting, CLS was able to learn useful features and hence the training loss decreases towards convergence. However, with the last setting, CLS falls back to Vanilla classifier, which fails to defend against adversarial examples. A similar experiment is also conducted with CLP and the results follow the same pattern. The only difference is that the training loss goes to ``nan'' on CLP under the first three settings, which means that the classifier diverges during training.

\section{Conclusion} \label{sec:conclusion}

In this paper, we introduce a new zero knowledge adversarial training defense, ZK-GanDef, which combines adversarial training and feature learning to better recognize and identify adversarial examples. We evaluate the test accuracy and the training overhead of ZK-GanDef against state-of-the-art zero knowledge adversarial training defenses (CLP and CLS) as well as full knowledge adversarial training defenses (FGSM-Adv and PGD-Adv). The results show that ZK-GanDef enhances the test accuracy on original and adversarial examples by up to 49.17\% compared to zero knowledge defenses. More importantly, ZK-GanDef has close test accuracy to full knowledge defenses (test accuracy degeneration is below 8.46\%), while taking much less training time (more than 51.53\% on training time reduction). Additionally, in contrast to full knowledge defenses, ZK-GanDaf can adapt to new types of adversarial examples because its training is adversarial example agnostic.

\section{Future Work} \label{sec:future}

In the future, we want to continue our research on designing defensive methods which provide defense against different single-step and iterative adversarial examples while consume less computation during the training.

\balance
\bibliographystyle{IEEEtranS}
\bibliography{reference}

\end{document}